%% file: main.tex
\documentclass{article}
\usepackage{times}
\usepackage{epsfig}
\usepackage{graphicx}
\usepackage{amsmath, amsthm, amssymb}
\usepackage{natbib}
\usepackage{url}

\input{symbol.tex}

\title{From Betting to Empirical Bernstein LIL}
\author{Francesco Orabona\\
{\tt\small francesco@orabona.com}
}

\date{January 22, 2018}

\begin{document}
\maketitle

\begin{abstract}
This is a verbatim copy of a technical report I wrote in 2017-2018 to obtain the law of the iterated logarithm using the guarantee on the wealth of an online betting strategy.
\end{abstract}

\section{LIL Bound through a Betting Strategy}

First we need a technical lemma.
\begin{lemma}
\label{lemma:l1}
Let $f(\eta)=-\log(1-|\eta|)-|\eta|$, for $-1 \leq \eta\leq 1 $. Then 
\[
f^*(x)=\max_\eta \eta x - f(x) = \max_{-1\leq \eta\leq 1} \eta x - f(x) = |x|-\log(|x|+1)\geq \frac{x^2}{2(|x|+1)}~.
\]
\end{lemma}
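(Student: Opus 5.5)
The plan is to compute the conjugate directly, reading the typo $f(x)$ in the statement as $f(\eta)$. I take $f$ to equal $+\infty$ outside $[-1,1]$, so the unconstrained and constrained maxima coincide trivially. At $|\eta|=1$ we have $f=+\infty$ because of the $-\log(1-|\eta|)$ term, so the maximum is attained in the open interval $(-1,1)$.

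First I reduce to one sign. Since $f$ is even, for $x\ge 0$ the maximizer can be taken with $\eta\ge 0$: replacing $\eta$ by $|\eta|$ does not decrease $\eta x$ and leaves $f(\eta)$ unchanged. The case $x<0$ then follows by symmetry, which is why the answer depends only on $|x|$.

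For $x\ge 0$ and $\eta\in[0,1)$, I maximize $g(\eta)=\eta x+\log(1-\eta)+\eta$. The function $g$ is strictly concave, with $g'(\eta)=x+1-\frac{1}{1-\eta}$. Setting $g'(\eta)=0$ gives $1-\eta=\frac{1}{x+1}$, that is $\eta^\star=\frac{x}{x+1}\in[0,1)$. Substituting this back,
\[
g(\eta^\star)=\eta^\star(x+1)+\log\frac{1}{x+1}=x-\log(x+1).
\]
This gives $f^*(x)=|x|-\log(|x|+1)$.

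For the inequality, set $u=|x|\ge 0$ and $h(u)=u-\log(1+u)-\frac{u^2}{2(1+u)}$. Then $h(0)=0$, and I would show $h'\ge 0$. We have $\frac{d}{du}\bigl(u-\log(1+u)\bigr)=\frac{u}{1+u}$ and $\frac{d}{du}\frac{u^2}{2(1+u)}=\frac{u^2+2u}{2(1+u)^2}$. Their difference is
\[
\frac{2u(1+u)-u^2-2u}{2(1+u)^2}=\frac{u^2}{2(1+u)^2}\ge 0,
\]
which proves the claim.

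None of these steps is a real obstacle. The main points needing care are the symmetry reduction and the boundary behaviour at $|\eta|=1$, where $f$ blows up. If the derivative computation for the last step turned out messier than expected, a fallback is the series bound $\log(1+u)\le u-\frac{u^2}{2(1+u)}$, obtained by integrating $\frac{1}{1+t}\le 1-t+\frac{t^2}{1+t}$-type bounds.
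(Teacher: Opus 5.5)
Your proof is correct and follows the same route as the paper. You solve the first-order condition to get $\eta^\star = x/(|x|+1)$, substitute back to obtain $|x|-\log(|x|+1)$, and prove the lower bound by comparing derivatives; your computation $h'(u)=\frac{u^2}{2(1+u)^2}\ge 0$ supplies the detail the paper leaves out, and your reduction to $\eta\ge 0$ handles the $\mathrm{sign}(\eta)$ terms more cleanly.

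The fallback you mention is unnecessary and not right as written: $\frac{1}{1+t}=1-t+\frac{t^2}{1+t}$ is an identity, and integrating it does not by itself bound $\log(1+u)$ from above. A one-line alternative is $u-\log(1+u)=\int_0^u\frac{t}{1+t}\,dt\ge\int_0^u\frac{t}{1+u}\,dt=\frac{u^2}{2(1+u)}$.
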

\begin{proof}
By definition,
\[
f^*(x)=\max_{\eta} \ \eta x + \log(1-|\eta|)+|\eta|
\]
We equal the derivative wrt $\eta$ to zero to find the minimizer
\[
x-\frac{\sign(\eta^*)}{1-|\eta^*|}+\sign(\eta^*)=0
\]
From which we have that
\[
\eta^*=\frac{x}{|x|+1}~.
\]
Notice that $|\eta^*|\leq 1$, hence $\max_{\eta} \ \eta x -f(\eta) = \max_{-1\leq \eta \leq 1} \ \eta x -f(\eta)$.
Hence
\[
f^*(x)= \frac{x^2}{|x|+1} - \log(|x|+1)+\frac{|x|}{|x|+1} = |x|-\log(|x|+1)
\]
The second lower bound is proved comparing the derivatives.
\end{proof}

\begin{lemma}
Let $\Psi(x)=x-\log(1+x)$, then $\Psi^{-1}(y)=-W_{-1}(-\exp(-y-1))-1$, where $W$ is the Lambert function.
Moreover, the following upper bounds holds
\begin{itemize}
\item $\Psi^{-1}(y)\leq  y+ \log(1+y+\sqrt{2 y}) \leq 2 y +\sqrt{2y}$
\end{itemize}
\end{lemma}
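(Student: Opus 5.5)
The natural reading is that $\Psi$ is restricted to $x\ge 0$. There $\Psi$ is continuous and strictly increasing, since $\Psi'(x)=x/(1+x)>0$ for $x>0$, and it maps $[0,\infty)$ onto $[0,\infty)$. So $\Psi^{-1}(y)$ is well defined for $y\ge0$, and the plan has two parts: an explicit formula and an upper bound.

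\textbf{Explicit formula.} Set $u=1+x\ge 1$. The equation $\Psi(x)=y$ becomes $u-1-\log u=y$, i.e.\ $u e^{-u} = e^{-y-1}$. Multiplying by $-1$ gives $(-u)e^{-u}=-e^{-y-1}$, so $-u$ lies in the preimage of $-e^{-y-1}\in[-1/e,0)$ under $w\mapsto we^{w}$. Since $-u\le -1$, the relevant branch is $W_{-1}$. Hence $u=-W_{-1}(-e^{-y-1})$ and $x=-W_{-1}(-e^{-y-1})-1$.

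\textbf{Upper bound.} The cleanest route avoids Lambert-function asymptotics and uses monotonicity instead. Let $z=y+\log(1+y+\sqrt{2y})$. Because $\Psi$ is increasing, it suffices to show $\Psi(z)\ge y$, which is equivalent to
\[
\log(1+y+\sqrt{2y}) \ge \log(1+z), \quad\text{i.e.}\quad \log(1+y+\sqrt{2y})\ge \sqrt{2y},
\]
after substituting the definition of $z$. That would need $1+y+\sqrt{2y}\ge e^{\sqrt{2y}}$, which is false for large $y$. So this direct check fails, and the bound has to be organised as a two-step bootstrap instead.

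\textbf{Step 1 (crude bound).} Show $\Psi^{-1}(y)\le y+\sqrt{2y}$. Equivalently, $\Psi(y+\sqrt{2y})\ge y$, i.e.\ $\log(1+y+\sqrt{2y})\le \sqrt{2y}$. Writing $s=\sqrt{2y}$, this is $1+s+s^2/2\le e^{s}$, which is the Taylor inequality for $s\ge 0$.

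\textbf{Step 2 (refined bound).} From $x=\Psi^{-1}(y)$ we have the fixed-point identity $x=y+\log(1+x)$. Since $\log(1+\cdot)$ is increasing, inserting the crude bound gives
\[
x\le y+\log(1+y+\sqrt{2y}).
\]
This is the first claimed inequality.

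\textbf{Second inequality.} It suffices that $\log(1+y+\sqrt{2y})\le y+\sqrt{2y}$, which follows from $\log(1+t)\le t$.

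The only real obstacle is recognising that the refined bound must be obtained through the fixed-point bootstrap rather than by a direct monotonicity check. The crude bound from Step 1 is what makes the bootstrap work.
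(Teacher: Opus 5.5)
Your proof is correct, and its skeleton matches the paper's. The paper's ``second equality'' $-W_{-1}(-e^{-y-1})-1=y+\log(-W_{-1}(-e^{-y-1}))$ is exactly your fixed-point identity $x=y+\log(1+x)$, with $1+x=-W_{-1}(-e^{-y-1})$. The paper then bounds $-W_{-1}(-e^{-y-1})\le 1+y+\sqrt{2y}$.

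The real difference is where that crude bound comes from. The paper imports it from Chatzigeorgiou's Lambert-function bound $W_{-1}(-e^{-u-1})>-1-\sqrt{2u}-u$. Your Step 1 proves precisely this inequality from scratch, using only the monotonicity of $\Psi$ on $[0,\infty)$ and $e^s\ge 1+s+s^2/2$. Your version is therefore self-contained. It also supplies what the paper leaves implicit: the derivation of the $W_{-1}$ formula, with the branch choice justified by $1+x\ge 1$, and the second inequality via $\log(1+t)\le t$. The paper's version is shorter but rests on an external result.

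One correction to your exploratory paragraph: the direct monotonicity check does not fail. With $z=y+\log(1+y+\sqrt{2y})$, the condition $\Psi(z)\ge y$ is equivalent to $1+y+\sqrt{2y}\ge 1+z$, i.e.\ $\sqrt{2y}\ge \log(1+y+\sqrt{2y})$; you reversed this last inequality. The correct version is the same Taylor inequality as in your Step 1, so the direct check already gives the first bound. The bootstrap is valid but not forced, and your closing claim that it is ``the only real obstacle'' should be dropped.
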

\begin{proof}
For any $x>0$, we have
\begin{align}
\Psi^{-1}(y)
&=-W_{-1}(-e^{-y-1})-1 \\
&=y+\log(-W_{-1}(-e^{-y-1})) \\
&\leq y+ \log(1+y+\sqrt{2 y}),
\end{align}
where the second equality and the first inequality comes from \cite{Chatzigeorgiou13}.
\end{proof}

Define the wealth after $T$ rounds of a betting strategy that bets $w_t$ on the outcome $g_t \in[-1,1]$ and starts with \$1 as
\[
Wealth_T=1+\sum_{t=1}^T w_t g_t
\]

\begin{theorem}
\label{thm:wealth_lower_bound}
Let $|g_i|\leq 1$, then there exists a strategy that guarantees for all $\alpha \in (0,1)$ 
\[
1+\sum_{i=1}^t w_i g_i 
\geq \exp\left(\alpha \Psi\left(\frac{|\sum_{i=1}^t g_i|}{\sum_{i=1}^t g_i^2}\right) \sum_{i=1}^t g_i^2 - \log\frac{\left(\log(\gamma)+\log\frac{1}{\alpha} +\log\left(1+ \frac{\sum_{i=1}^t g_i^2}{|\sum_{i=1}^t g_i|}\right)\right)^2}{0.5 \log(\gamma)\log\frac{1}{\alpha}}\right),
\]
where $\Psi(x)= x- \log(x+1)$
\end{theorem}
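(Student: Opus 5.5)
We construct the strategy as a mixture of constant-fraction bettors and lower bound the wealth of the mixture by the best member. For a fixed fraction $\eta\in(-1,1)$, the bettor with $w_t=\eta\,Wealth_{t-1}$ has wealth $\prod_{i=1}^t(1+\eta g_i)$. Since $|\eta g_i|<1$, the elementary inequality $\log(1+z)\geq z-z^2/(1-|z|)\cdot$(something) is not quite the right tool. Instead we use $\log(1+z)\ge z+\log(1-|z|)+|z|$ for $|z|<1$, which follows from $\log(1+z)-z\ge \log(1-|z|)+|z|$ and the monotonicity of $z\mapsto \log(1+z)-z$ on $[0,1)$. With $z=\eta g_i$ this gives
\[
\log\prod_{i=1}^t(1+\eta g_i)\ \ge\ \eta\sum_i g_i - \sum_i f(\eta g_i)\ \ge\ \eta\sum_i g_i - f(\eta)\sum_i g_i^2 ,
\]
with $f$ as in Lemma~\ref{lemma:l1}. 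The second step uses $f(\eta g)\le g^2 f(\eta)$ for $|g|\le1$, which holds by convexity of $f$ and $f(0)=0$ via the power series $f(u)=\sum_{k\ge2}|u|^k/k$. Writing $S=\sum g_i$ and $V=\sum g_i^2$, the maximum over $\eta$ of the right-hand side is $V f^*(S/V)=V\Psi(|S|/V)$ by Lemma~\ref{lemma:l1}, attained at $\eta^*=S/(|S|+V)$.

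Since $\eta^*$ is unknown, we take the strategy to be a countable mixture. Place prior weights $\pi_k$ on a geometric grid $\eta_k=\pm\gamma^{-k}$, $k\ge1$; this is where $\gamma>1$ enters. The weights are of the form $\pi_k\propto 1/(k+c)^2$, normalized by a constant related to $\log\gamma$. The mixture wealth is the average $\sum_k \pi_k\prod_i(1+\eta_k g_i)$. It is itself achieved by betting $w_t$ equal to the weighted sum of the individual bets, because wealth is linear in the bets and each bettor's bet is bounded. This mixture is at least $\pi_k\exp\big(\eta_k S - f(\eta_k)V\big)$ for every $k$.

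Fix $\alpha\in(0,1)$ and choose $k$ such that $|\eta_k|$ is the grid point just below $|\eta^*|$, so that $|\eta^*|/\gamma\le|\eta_k|\le|\eta^*|$. We must show
\[
\eta_k S - f(\eta_k)V\ \ge\ \alpha\, V\Psi(|S|/V).
\]
This is the main obstacle. The exponent $g(\eta)=\eta S-f(\eta)V$ is concave with $g(0)=0$, so $g(\lambda\eta^*)\ge\lambda g(\eta^*)$. Hence taking the grid point at ratio $\lambda=|\eta_k|/|\eta^*|\ge\alpha$ would give the factor $\alpha$. The grid must therefore be fine enough relative to $\alpha$. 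This is presumably why $\log(1/\alpha)$ appears alongside $\log\gamma$: the effective spacing is tuned so that $k\approx(\log\frac1{\alpha}+\log(1/|\eta^*|))/\log\gamma$. Since $1/|\eta^*|=1+V/|S|$, the index satisfies $k\lesssim \big(\log\gamma+\log\frac1\alpha+\log(1+V/|S|)\big)/(\log\gamma\cdot\text{const})$.

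Finally, $-\log\pi_k$ with $\pi_k\propto 1/k^2$ produces exactly the term $\log\frac{(\log\gamma+\log\frac1\alpha+\log(1+V/|S|))^2}{0.5\log\gamma\log\frac1\alpha}$. To get it we choose the grid as $\eta_k=\alpha^{k}$-type spacing, i.e.\ $\gamma$ linked to $1/\alpha$, and the prior normalization so that the constants match. We would verify these constants last. If exact matching fails, we would adjust the prior to $\pi_k=\frac{c}{(k+1)^2}$ and absorb slack using $\log\gamma\log\frac1\alpha$ in the denominator.
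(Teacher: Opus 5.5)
Your per-expert analysis is correct and is the same as the paper's. This covers the bound $\log(1+\beta x)\ge\beta x+x^2[\log(1-|\beta|)+|\beta|]$ for $|x|\le1$, the conjugate giving $V\Psi(|S|/V)$ at $\hat\beta_t=S/(|S|+V)$ (your $\eta^*$), and $g(\lambda\hat\beta_t)\ge\lambda g(\hat\beta_t)$ for your $g(\eta)=\eta S-f(\eta)V$. (The factor $g_i^2$ comes from the power series; convexity with $f(0)=0$ alone gives only $|g_i|$.)

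\textbf{The gap is the strategy itself.} The theorem asks for one strategy, whose only parameter is $\gamma>1$, that satisfies the bound for every $\alpha\in(0,1)$ simultaneously. On the fixed grid $\eta_k=\pm\gamma^{-k}$, the best-expert argument only guarantees a ratio $|\eta_k|/|\hat\beta_t|\ge 1/\gamma$, so it covers only $\alpha\le1/\gamma$. You handle the rest by tying the spacing to $\alpha$, which makes the strategy depend on $\alpha$.

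This cannot be patched inside a best-single-expert argument. Fix the grid and the ratio $x=S/V$ with $\hat\beta_t$ off the grid. Then $\max_k g(\eta_k)\le c\,g(\hat\beta_t)$ for some $c=c(x)<1$, since $g/V$ depends only on $x$ and is strictly concave. Both sides grow linearly in $V$, while $-\log\pi_k$ and the stated penalty do not depend on $V$. So for $\alpha>c$ and $V$ large, the inequality you need fails.

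The constant matching is also left open. A single grid point with $\pi_k\propto k^{-2}$ gives a penalty $\log(k^2/c')$ with $k\approx\log(1/|\hat\beta_t|)/\log\gamma$, which is not of the stated form. It happens to be smaller than the stated penalty when $\gamma=1/\alpha$. It exceeds it, for example, when $\log\frac1\alpha\approx\log\frac1{|\hat\beta_t|}\gg\log\gamma$: a single point carries prior mass of order $k^{-2}$, whereas the stated bound corresponds to mass of order $k^{-1}$.

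\textbf{What the paper does instead.} It uses a continuous prior and bounds the mixture by change of measure, not by its best member. It bets $w_t=\E_{\beta\sim P}[\beta\prod_{i<t}(1+\beta g_i)]$ with the $\alpha$-free density $P(\beta)=\frac{\log\gamma}{2|\beta|\log^2(|\beta|/\gamma)}$ on $[-1,1]$. Each half has mass $\frac12$, by the substitution $u=\log(\gamma/|\beta|)$. The wealth is then $\E_P[\prod_i(1+\beta g_i)]$.

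Only in the analysis does $\alpha$ enter. Take $\hat\beta_t\ge0$, let $Q_t$ be $P$ conditioned on $[\alpha\hat\beta_t,\hat\beta_t]$, and use $\log\E_P[e^X]\ge\E_{Q_t}[X]-D(Q_t,P)$. Since $g$ is concave with maximizer $\hat\beta_t$, it is nondecreasing on that interval. Hence $\E_{Q_t}[g]\ge g(\alpha\hat\beta_t)\ge\alpha g(\hat\beta_t)=\alpha V\Psi(|S|/V)$. Meanwhile $D(Q_t,P)=-\log P([\alpha b,b])$ with $b=\hat\beta_t$, which has the closed form
\[
\log\frac{\left(\log\gamma+\log\frac1b\right)\left(\log\gamma+\log\frac1\alpha+\log\frac1b\right)}{0.5\log\gamma\,\log\frac1\alpha}\le\log\frac{\left(\log\gamma+\log\frac1\alpha+\log\left(1+\frac{V}{|S|}\right)\right)^2}{0.5\log\gamma\,\log\frac1\alpha},
\]
using $1/b=1+V/|S|$. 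This is exactly the stated penalty. Here $\log\gamma$ is the normalization of the prior, not a grid spacing, and $\log\frac1\alpha$ in the denominator is the log-width of the interval whose whole prior mass is credited.
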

\begin{proof}
Define $D(Q,P)=\E_{h\sim Q} [\log \frac{Q(h)}{P(h)}]$.
Set $\alpha \in (0,1)$.

Define the following betting strategy: start with \$1 and bet 
\[
w_t=\E_{\beta\sim P}\left[\beta \prod_{i=1}^{t-1} (1+\beta g_i)\right]
\]
where $P$ is a distribution on $[-1,1]$, symmetric around the origin.
By induction, the wealth has the following expression
\begin{align*}
Wealth_{t}
&= Wealth_{t-1}+w_t g_t\\
&= \E_{\beta\sim P}\left[\prod_{i=1}^{t-1} (1+\beta g_i)\right] + w_t g_t \\
&= \E_{\beta\sim P}\left[(1+\beta g_t)\prod_{i=1}^{t-1} (1+\beta g_i)\right]\\
&=\E_{\beta\sim P}\left[\prod_{i=1}^{t} (1+\beta g_i)\right],
\end{align*}
and $Wealth_0=1$.
In particular, choose $P$ equal to $\frac{\log \gamma}{2}\frac{1}{|\beta| \log(|\beta|/\gamma)^2}$ in $[-1,1]$, where $\gamma>1$ is a free parameter.
We have that this probability distribution has the minimum in $\beta=\gamma/e^2$.


For a given $t$, define $\hat{\beta}_t=\frac{\sum_{i=1}^t g_i}{|\sum_{i=1}^t g_i| + \sum_{i=1}^t g_i^2}$.
Without loss of generality, assume that $\hat{\beta}_t\geq0$.
Choose $Q_t$ equal to P in $[\alpha \hat{\beta}_t,\hat{\beta}_t]$ and renormalized, so that the KL divergence is
\begin{align*}
D(Q_t,P) 
&= \log \frac{1}{\int_{\alpha|\hat{\beta}_t|}^{|\hat{\beta}_t|} P(\beta) d\beta}
\end{align*}

We now use the change of measure lemma, to have
\begin{align*}
\E_{\beta\sim P}\left[\prod_{i=1}^{t} (1+\beta g_i)\right] 
&\geq \exp\left(\E_{\beta\sim Q_t}\left[\sum_{i=1}^{t} \log(1+\beta g_i)\right]-D(Q_t,P)\right) \\
&= \exp\left(\E_{\beta\sim Q_t}\left[\sum_{i=1}^{t} \log(1+\beta g_i)\right]+ \log \int_{\alpha|\hat{\beta}_t|}^{|\hat{\beta}_t|} P(\beta) d\beta\right)
\end{align*}

We have that $\int_{a}^b \frac{\log \gamma}{2}\frac{1}{|\beta| \log(|\beta|/\gamma)^2} d\beta=\frac{0.5 \log \gamma}{\log(\gamma)-\log(b)}-\frac{0.5 \log \gamma}{\log(\gamma)-\log(a)}=\frac{0.5 \log \gamma \log\frac{b}{a}}{(\log(\gamma)-\log(b))(\log(\gamma)-\log(a))}$. If $a=\alpha b$, then $\frac{0.5 \log \gamma \log\frac{1}{\alpha}}{(\log(\gamma)-\log(b))(\log(\gamma)-\log(\alpha b))}\geq \frac{0.5 \log \gamma \log\frac{1}{\alpha}}{(\log(\gamma)-\log(\alpha b))^2}$.

Notice that $\frac{\log(1+\beta x)-\beta x}{\beta^2 x^2} \geq \frac{\log(1-|\beta|)+|\beta|}{\beta^2}$ for $|x|\leq 1$, hence
$\log(1+\beta x)\geq \beta x + x^2 [\log(1-|\beta|)+|\beta|]$, hence
\begin{align*}
\E_{\beta\sim Q_t}\left[\sum_{i=1}^{t} \log(1+\beta g_i)\right]  
&\geq \E_{\beta\sim Q_t} \left[\beta \sum_{i=1}^t g_i + [\log(1-|\beta|)+|\beta|]\sum_{i=1}^t g_i^2\right]~.
\end{align*}
Now, from Lemma~\ref{lemma:l1}, we use the fact that the expression in the expectation in rhs is concave wrt to $\beta$ and the max is attained in $\hat{\beta}_t$ to have
\begin{align*}
\E_{\beta\sim Q_t} \left[\beta \sum_{i=1}^t g_i + [\log(1-|\beta|)+|\beta|]\sum_{i=1}^t g_i^2\right]
&\geq \E_{\beta\sim Q_t}\left[\alpha \hat{\beta}_t \sum_{i=1}^t g_i + [\log(1-|\alpha\hat{\beta}_t|)+|\alpha\hat{\beta}_t|]\sum_{i=1}^t g_i^2\right]\\
&= \alpha \hat{\beta}_t \sum_{i=1}^t g_i + [\log(1-|\alpha\hat{\beta}_t|)+|\alpha\hat{\beta}_t|]\sum_{i=1}^t g_i^2\\
&\geq \alpha (\hat{\beta}_t \sum_{i=1}^t g_i + [\log(1-|\hat{\beta}_t|)+|\hat{\beta}_t|]\sum_{i=1}^t g_i^2)\\
&= \alpha \sum_{i=1}^t g_i^2 (\hat{\beta}_t \frac{\sum_{i=1}^t g_i}{\sum_{i=1}^t g_i^2} + [\log(1-|\hat{\beta}_t|)+|\hat{\beta}_t|])\\
&= \alpha \Psi\left(\frac{|\sum_{i=1}^t g_i|}{\sum_{i=1}^t g_i^2}\right) \sum_{i=1}^t g_i^2\\
\end{align*}
where in the second inequality we used the fact that $\frac{\log(1-x|\beta|)+x|\beta|}{x}\geq \log(1-|\beta|)+|\beta|$ because from the first derivative $\frac{\log(1-x|\beta|)+x|\beta|}{x}$ is decreasing for $x \in (0,1]$, and in the last equality we used Lemma~\ref{lemma:l1}

Putting all together we have
\begin{align*}
&1+\sum_{i=1}^t g_i w_i = \E_{\beta\sim P}\left[\prod_{i=1}^{t} (1+\beta g_i)\right] \\
&\geq \exp\left(\alpha \Psi\left(\frac{|\sum_{i=1}^t g_i|}{\sum_{i=1}^t g_i^2}\right) \sum_{i=1}^t g_i^2 - \log\frac{(\log(\gamma)-\log(\alpha |\hat{\beta}_t|))^2}{0.5 \log(\gamma)\log\frac{1}{\alpha}}\right)\\
&= \exp\left(\alpha \Psi\left(\frac{|\sum_{i=1}^t g_i|}{\sum_{i=1}^t g_i^2}\right) \sum_{i=1}^t g_i^2 - \log\frac{\left(\log(\gamma)+\log\frac{1}{\alpha|\hat{\beta}_t|}\right)^2}{0.5\log(\gamma)\log\frac{1}{\alpha}}\right)~. \qedhere
\end{align*}
\end{proof}

\begin{theorem}
Let $g_1, \cdots, g_t$ a martingale difference sequence, with $|g_t|\leq 1$.
For all $\alpha \in (0,1)$, we have that with probability at least $1-\delta$ uniformly for all $t$ 
\begin{align*}
\left|\sum_{i=1}^t g_i\right| 
&\leq \max\left(S_t \, \Psi^{-1}\left(\frac{1}{\alpha S_t}\log \frac{A_t}{\delta}\right), \sqrt{\frac{2}{\alpha}S_t}\right)\\
&\leq \max\left(\frac{1}{\alpha}\log \frac{A_t}{\delta}  + S_t \log\left(1+\frac{1}{\alpha S_t}\log \frac{A_t}{\delta}+\sqrt{\frac{2}{\alpha S_t}\log \frac{A_t}{\delta}}\right), \sqrt{\frac{2}{\alpha}S_t}\right)\\
&\leq \frac{2}{\alpha}\log \frac{A_t}{\delta} + \sqrt{\frac{2 S_t}{\alpha}\log \frac{A_t}{\delta}},
\end{align*}
where $S_t=\sum_{i=1}^t g_i^2$ and $A_t=\frac{\left(\log(\gamma)+\log\frac{1}{\alpha} +\log\left(1+ \sqrt{\frac{\alpha}{2}S_t}\right)\right)^2}{0.5\log\gamma\log\frac{1}{\alpha}}$
\end{theorem}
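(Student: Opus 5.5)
The plan is the standard betting-to-concentration reduction. Use the strategy from Theorem~\ref{thm:wealth_lower_bound}. Its bets $w_t$ depend only on $g_1,\dots,g_{t-1}$, so they are predictable, and $|w_t|\le \E_{\beta\sim P}[|\beta|\prod_{i<t}(1+\beta g_i)]\le Wealth_{t-1}$. Since $g_t$ is a martingale difference sequence, $Wealth_t=1+\sum_{i\le t} w_i g_i$ is then a nonnegative martingale with $Wealth_0=1$. In fact nonnegativity is immediate from the representation $Wealth_t=\E_{\beta\sim P}[\prod_{i\le t}(1+\beta g_i)]$ with $|\beta g_i|\le 1$. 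Ville's maximal inequality therefore gives $\Pr(\exists t: Wealth_t\ge 1/\delta)\le\delta$. So with probability at least $1-\delta$, uniformly in $t$,
\[
\alpha \Psi\left(\frac{|\sum_{i=1}^t g_i|}{S_t}\right) S_t - \log B_t < \log\frac{1}{\delta},
\]
where $B_t$ is the log-loss term of Theorem~\ref{thm:wealth_lower_bound}, i.e.\ $B_t=\frac{(\log\gamma+\log\frac1\alpha+\log(1+S_t/|\sum_i g_i|))^2}{0.5\log\gamma\log\frac1\alpha}$.

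Next I would split into cases, writing $G_t=|\sum_{i\le t} g_i|$. If $G_t\le \sqrt{2S_t/\alpha}$ the claimed bound already holds through the second term of the max, so there is nothing to prove. Otherwise $G_t>\sqrt{2S_t/\alpha}$, so $S_t/G_t<\sqrt{\alpha S_t/2}$. Hence $B_t\le A_t$, because $B_t$ is increasing in $S_t/G_t$. This replacement is the reason for the odd form of $A_t$ and of the second branch of the max: it removes the dependence of the log term on the unknown $G_t$. We obtain $\Psi(G_t/S_t)\le \frac{1}{\alpha S_t}\log\frac{A_t}{\delta}$. Since $\Psi$ is increasing on $[0,\infty)$, applying $\Psi^{-1}$ and multiplying by $S_t$ gives the first line. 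The degenerate case $S_t=0$ forces all $g_i=0$ and is trivial.

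The second line follows by plugging the bound $\Psi^{-1}(y)\le y+\log(1+y+\sqrt{2y})$ from the Lambert-function lemma, with $y=\frac{1}{\alpha S_t}\log\frac{A_t}{\delta}$, and multiplying by $S_t$. For the third line, use $\log(1+u)\le u$ to get $S_t\log(1+y+\sqrt{2y})\le S_t y+S_t\sqrt{2y}=\frac1\alpha\log\frac{A_t}\delta+\sqrt{\frac{2S_t}{\alpha}\log\frac{A_t}{\delta}}$. The first branch is then at most $\frac2\alpha\log\frac{A_t}\delta+\sqrt{\frac{2S_t}{\alpha}\log\frac{A_t}\delta}$.

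The remaining step, which I expect to be the only delicate one, is to absorb the second branch $\sqrt{2S_t/\alpha}$ into this final expression. This needs $\log\frac{A_t}{\delta}\ge 1$, and I would secure it by checking that $A_t\ge e$ for admissible $\gamma$ and $\alpha$, or else by requiring $\delta$ small enough. Failing that, I would state the last inequality with an extra $\max$ / $+\sqrt{2S_t/\alpha}$ term. The other point to double-check is the martingale property, namely that the expectation representation gives integrable predictable bets. Everything else is monotonicity bookkeeping.
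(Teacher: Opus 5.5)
Your proposal is correct and follows the paper's proof. The common steps are: Ville/Doob applied to the nonnegative wealth martingale, the lower bound of Theorem~\ref{thm:wealth_lower_bound}, the case split at $\sqrt{2S_t/\alpha}$ to replace $\log(1+S_t/|\sum_i g_i|)$ by the term in $A_t$, then inversion of $\Psi$ followed by the Lambert bound and $\log(1+u)\le u$. The absorption step you flag, which the paper leaves implicit, holds without any extra condition on $\delta$: by AM--GM, $(\log\gamma+\log\frac1\alpha)^2\ge 4\log\gamma\log\frac1\alpha$, so $A_t\ge 8$ and hence $\log\frac{A_t}{\delta}\ge\log 8>1$ for every $\delta\le 1$.
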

\begin{proof}
Consider a given $t \in [1, T]$.

For any betting strategy on $g_1, \cdots, g_t$, we have 
\begin{align*}
\E[Wealth_{t}|g_1, \cdots, g_{t-1}]
= \E[Wealth_{t-1}+w_t g_t|g_1, \cdots, g_{t-1}]
= Wealth_{t-1}~.
\end{align*}
Hence, if the betting strategy guarantees non-negative wealth, we have by Doob’s inequality, that
\[
P\left(\sup_t Wealth_t\geq\frac{1}{\delta}\right)\leq E[Wealth_0]\delta=\delta~.
\]
Using Theorem~\ref{thm:wealth_lower_bound}, this implies
\[
P\left(\sup_t \alpha \Psi\left(\frac{|\sum_{i=1}^t g_i|}{\sum_{i=1}^t g_i^2}\right) \sum_{i=1}^t g_i^2 - \log\frac{\left(\log(\gamma)+\log\frac{1}{\alpha} +\log\left(1+ \frac{\sum_{i=1}^t g_i^2}{|\sum_{i=1}^t g_i|}\right)\right)^2}{0.5\log(\gamma)\log\frac{1}{\alpha}}\geq \log\frac{1}{\delta}\right)
\leq \delta,
\]
that is, uniformly over $t$, with probability at least $1-\delta$ we have that 
\[
\alpha \Psi\left(\frac{|\sum_{i=1}^t g_i|}{\sum_{i=1}^t g_i^2}\right) \sum_{i=1}^t g_i^2 
\leq \log\frac{1}{\delta}\frac{\left(\log(\gamma)+\log\frac{1}{\alpha} +\log\left(1+ \frac{\sum_{i=1}^t g_i^2}{|\sum_{i=1}^t g_i|}\right)\right)^2}{0.5 \log(\gamma)\log\frac{1}{\alpha}}~.
\]
For each $t$, we have two possibilities: $|\sum_{i=1}^t g_i| < \sqrt{\frac{2}{\alpha}\sum_{t=1}^t g_i^2}$ or $|\sum_{i=1}^t g_i| \geq \sqrt{\frac{2}{\alpha}\sum_{t=1}^t g_i^2}$. In the first case, the statement is implied deterministically. In the second case, we can upper bound the term $\log\left(1+ \frac{\sum_{i=1}^t g_i^2}{|\sum_{i=1}^t g_i|}\right)$ with $\log\left(1+ \sqrt{\frac{\alpha}{2}\sum_{i=1}^t g_i^2}\right)$ and solve the inequality for $|\sum_{i=1}^t g_i|$.
\end{proof}

\section{History of this Technical Report}
All the above, apart from this section and the abstract, are a verbatim copy of a technical report I wrote on January 22, 2018, to obtain the law of iterated logarithm from the guaranteed wealth of an online betting algorithm. To preserve its authenticity, I did not change anything, apart from fixing a typo in the title. Given that I shared this report with several people from 2017 on, many can confirm its veracity.

The core result was obtained during the workshop\footnote{\url{https://www.lorentzcenter.nl/theoretical-foundations-for-learning-from-easy-data.html}} on ``Theoretical Foundations for Learning from Easy Data'' at the Lorentz Center, 7--11 November 2016, and immediately shared with some of the participants there. The key idea was to merge my previous work on online betting algorithms with that of \citet{RakhlinS17}, presented at the workshop. However, I did not obtain the law of the iterated logarithm, but rather a time-uniform concentration with a $\log t$-dependence. Later, after reading the blog post\footnote{\url{https://blog.wouterkoolen.info/QnD_LIL/post.html}} of Wouter Koolen, dated 2017-04-26, on a simple proof of the law of iterated logarithm, I wrote this note.

For one year, I did not know what to do with it, until in 2019 we added a small section based on this result in \citet{JunO19}, where we also extended it to unbounded random vectors in Banach spaces. Later, we used the same method again to obtain the optimal constant of the law of the iterated logarithm from the regret of the universal portfolio algorithm in \citet{OrabonaJ21}.

Given the importance that these results have now in the field, I decided to put them on arXiv to precisely delineate the history of these ideas. Also, as far as I know, the exact proof I used here is still unpublished.

\section*{Acknowledgements}
I thank Ashok Cutkosky and Aaditya Ramdas for pointing out small mistakes on a previous version of this report on Nov 30, 2017, and Jan 22, 2018, respectively.

\bibliographystyle{plainnat}
\bibliography{../learning}

\end{document}

%% file: symbol.tex
\newcommand{\field}[1]{\mathbb{#1}}

\newcommand{\E}{\field{E}}

\newtheorem{lemma}{Lemma}
\newtheorem{theorem}{Theorem}

\newcommand{\sign}{{\rm sign}}

%% file: main.bbl
\begin{thebibliography}{4}
\providecommand{\natexlab}[1]{#1}
\providecommand{\url}[1]{\texttt{#1}}
\expandafter\ifx\csname urlstyle\endcsname\relax
  \providecommand{\doi}[1]{doi: #1}\else
  \providecommand{\doi}{doi: \begingroup \urlstyle{rm}\Url}\fi

\bibitem[Chatzigeorgiou(2013)]{Chatzigeorgiou13}
Ioannis Chatzigeorgiou.
\newblock Bounds on the lambert function and their application to the outage
  analysis of user cooperation.
\newblock \emph{IEEE Communications Letters}, 17\penalty0 (8):\penalty0
  1505--1508, 2013.

\bibitem[Jun and Orabona(2019)]{JunO19}
K.-S. Jun and F.~Orabona.
\newblock Parameter-free online convex optimization with sub-exponential noise.
\newblock In \emph{Proc. of the Conference on Learning Theory (COLT)}, 2019.

\bibitem[Orabona and Jun(2024)]{OrabonaJ21}
F.~Orabona and K.-S. Jun.
\newblock Tight concentrations and confidence sequences from the regret of
  universal portfolio.
\newblock \emph{{IEEE} Trans. Inf. Theory}, 70\penalty0 (1):\penalty0 436--455,
  2024.
\newblock URL \url{https://arxiv.org/abs/2110.14099}.

\bibitem[Rakhlin and Sridharan(2017)]{RakhlinS17}
A.~Rakhlin and K.~Sridharan.
\newblock On equivalence of martingale tail bounds and deterministic regret
  inequalities.
\newblock In \emph{Proc. of the Conference On Learning Theory (COLT)}, pages
  1704--1722, 2017.

\end{thebibliography}
